\pdfoutput=1
\documentclass[conference]{IEEEtran}
\IEEEoverridecommandlockouts
\usepackage{amsmath,amssymb,amsfonts,amsthm}
\usepackage{graphicx}
\usepackage{booktabs}
\usepackage{stfloats}   % allows [b] placement of double-column floats (IEEEtran HOWTO)
\usepackage{multirow}
\usepackage{xcolor}
\usepackage{algorithm}
\usepackage{algpseudocode}
\graphicspath{{figs/}}
\providecommand{\newblock}{\hskip .11em plus .33em minus .07em}

\newtheorem{lemma}{Lemma}
\newtheorem{proposition}{Proposition}
\newtheorem{remark}{Remark}

\newcommand{\R}{\mathbb{R}}
\newcommand{\free}{\mathcal{F}}
\newcommand{\lib}{\mathcal{U}}
\newcommand{\gcsstar}{GCS$^*$}
\newcommand{\ixg}{IxG$^*$}

\title{\bf Sampling-based Certified Planning with\\
Graphs of Convex Sets}

\newif\ifanon
\anonfalse

\ifanon\else\usepackage[hidelinks]{hyperref}\fi

\author{Peng~Xie and Amr~Alanwar%
\thanks{Peng Xie and Amr Alanwar are with the TUM School of Computation, Information and Technology, Department of Computer Engineering, Technical University of Munich, 74076 Heilbronn, Germany (e-mail: p.xie@tum.de; alanwar@tum.de).}}

\begin{document}
\maketitle

\begin{abstract}
Planners on graphs of convex sets return trajectories that are
collision-free by construction, provided the convex regions are
collision-free.  The region generator only promises that property
probabilistically, and no planner in the family verifies it.  We report
the first measurement of what the gap costs.  On a scaled 14-DOF bimanual
library, $3.2\%$ of interface samples are in collision, and a search-based
GCS planner (\gcsstar) turns that volume error into a $62\%$ answer error:
$18$ of $29$ pick-and-place queries return trajectories that drive the
arms through the shelves, up to $91$\,mm deep, reported as successes.
Repairing the library does not work; a ten times stricter acceptance
contract, sums-of-squares certified regions, and uniform margins each
destroy the connectivity planning needs before they deliver soundness.
We instead build a planner that certifies its answers.  It samples the
overlaps and shared faces of the decomposition, prunes with an admissible
informed bound, and verifies the one candidate each search round proposes,
continuously, by a chain of clearance certificate balls with no resolution
parameter; failures are repaired with local in-region detours, and the
convex polish is re-verified.  Head-to-head on all $29$ task queries it
delivers zero invalid answers against $21$ for the reference, reaches its first certified answer in $0.11$\,s against $1.59$\,s for
the reference's unverified one, and reproduces the reference optimum
exactly on every query whose reference answer is physically valid.
\end{abstract}

% Introduction -- v2, 2026-08-07.  Plain language, one page.  Formal
% definitions live in Sec. Problem Statement; the literature review lives in
% Related Work.  Numbers carry "% src:" comments naming their artifact.
\section{Introduction}

Planners built on graphs of convex sets (GCS) return trajectories that are
collision-free by construction: the trajectory stays inside a library of
convex regions, and the regions are collision-free.  The second half of that sentence is an assumption inherited from the
region generator.  The original formulation~\cite{marcucci2023sciencerobotics} and its
search-based descendants~\cite{chia2024gcsstar} take it as given, and
neither re-examines the trajectory returned.

The generators that produce these libraries at scale make a probabilistic
promise: after a batch of random samples inside a candidate region comes back
clean, the region is accepted, and its colliding volume is small with high
confidence~\cite{werner2024iriszo}.  Small is not zero.  On a 14-DOF bimanual library built at default settings, $3.2\%$ of
the samples we draw on region overlaps are in collision.
% src: e7_biman219.json (64/2000); e7_panda55.json gives 0.3% on a curated 7-DOF library
Feeding that library to \gcsstar~\cite{chia2024gcsstar} turns a
$3.2\%$ model error into a $62\%$ answer error: $18$ of $29$
pick-and-place queries return trajectories that drive the arms through
the shelves, by $39\,$mm at the median and $91\,$mm at the worst, for
up to half the length of the path.
% src: e9_task_sweep_verdicts.json (18/29); e9b_depth_profiles.json (depths)
It reports success on every one.

\begin{figure}[t]
\centering
\includegraphics[width=0.85\linewidth]{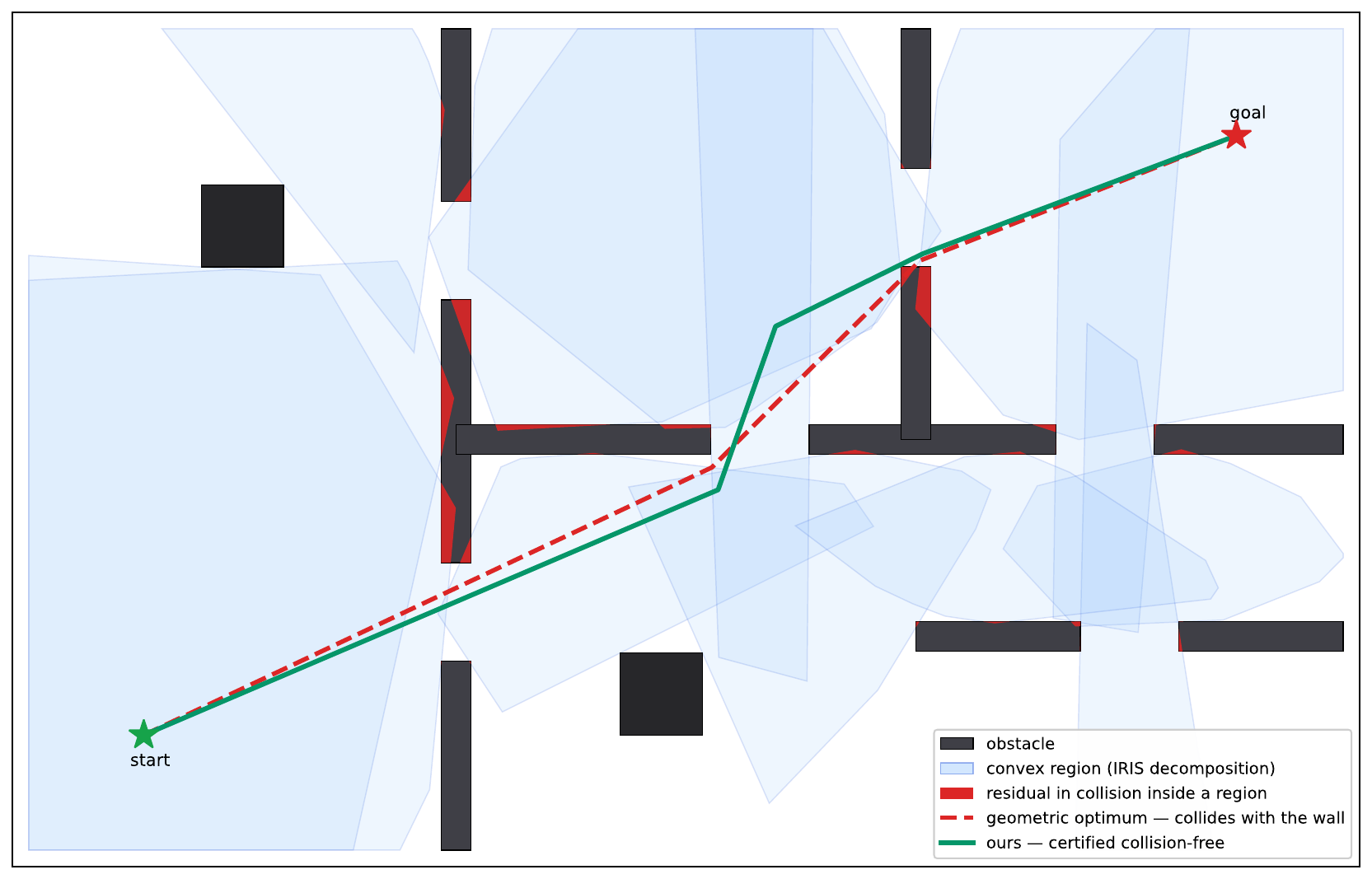}
\caption{Why a task fails.  The decomposition tolerates thin residual wedges
(red), and the reference optimum (dashed red) presses into one; our
planner returns the certified path (green).
Sec.~\ref{sec:problem} derives the mechanism.
% src: e13c_fig2d_floorplan.py
}
\label{fig:floorplan}
\end{figure}

A planner is an optimizer, and that is why the error grows on the way
through.  It searches for the cheapest trajectory the model allows: wherever
optimism shortens the path, the optimum moves there.  Shortest paths also hug obstacles, exactly where a
sampling-based generator's residual sits (Fig.~\ref{fig:floorplan}), and
grasp configurations put the endpoints in the same cramped geometry.  

Tightening the library does not repair it: a ten times stricter
acceptance contract, sums-of-squares certified
regions~\cite{dai2024ciris}, and uniform margins each dissolve the
region graph before they deliver soundness (Sec.~\ref{sec:ciris}).
% src: e10_regen219_report.json; e15_ciris.json; e8_erosion_biman219.json

Our planner buys certainty along the trajectory instead, and it buys it by
sampling.  We sample the overlaps and shared faces of the library, the places a
trajectory must cross to move between regions, and check every sample against the true collision geometry before it joins the roadmap.
Sampling puts the candidate set in our hands: a waypoint is a
configuration that can be tested, rejected, or moved; an optimum of a
convex program offers no such handle.  Sampling also makes the planner
fast: an informed filter discards interfaces that cannot improve the
incumbent, and the convex program runs once on a short corridor.
A candidate is then verified continuously, one clearance query
certifying a whole ball of configurations and overlapping balls
certifying the segment with no discretization step to fall through;
failed segments are repaired inside the region that produced them,
where convexity keeps the detour legal, and the cone program that
polishes the survivor goes back through the same verification.

The result is a planner that hands back only verified answers.  Across
all $29$ task queries the reference planner delivers $21$ trajectories that
fail certification and ours delivers zero, our first certified answer arrives in a fourteenth of the time the
reference needs for an unverified one, and on every query whose
reference answer is physically valid we return that optimum exactly.
% src: e16_headtohead.json

\paragraph{Contributions}
This paper reports the first measurement of region soundness in GCS
planning and of its effect on planner output, through a benchmark protocol built for bit-exact, twice-validated,
cross-machine reproduction.  It then
gives evidence that the defect cannot be fixed in the library, since a
stricter acceptance contract, sums-of-squares certificates, and uniform
margins each dissolve the region graph before they make it sound.  Its
constructive part is a sampling-based planning layer, combining
interface sampling, informed pruning, continuous clearance certificates,
in-region repair, and a re-verified convex polish, that keeps the graph
intact, returns certified paths or explicit failures, and matches the
reference optimum wherever the reference is right.  The code, both
libraries, the benchmark data, and the result videos are released.

% Related Work -- v1, 2026-08-07.  Positive-first: each work is introduced
% by what it contributes, then by how the region assumption enters it.
% Citations are attached individually to the work they name.
\section{Related Work}

This section places the paper in three bodies of work: the GCS planning
family whose shared assumption we measure, the generators that produce
its regions, and the checking and certificate machinery our layer
builds on.

\subsection{Planning on convex-region libraries}

Marcucci et al.~\cite{marcucci2024sppgcs} formulate shortest paths in
graphs of convex sets with a relaxation tight enough that rounded
solutions are near-optimal, turned into a motion planner
in~\cite{marcucci2023sciencerobotics}.  A family followed: implicit
search (\gcsstar~\cite{chia2024gcsstar};
\ixg~\cite{natarajan2024ixg}), multi-query bounds and
walks~\cite{morozov2024multiquery,morozov2025swp},
travelling-salesman variants (GHOST~\cite{tang2026ghost}),
tensor-train compression~\cite{tango2026}, and geodesic
generalization~\cite{cohn2025ggcs}; planners over safe
boxes~\cite{marcucci2024fpp} and optimized
covers~\cite{wu2025cover} rest on the same input.

These planners share one assumption: the regions they consume are
collision-free.  Table~\ref{tab:trail} records how five of them state
it; the rest consume the decomposition as given, and none checks the
returned trajectory against the collision geometry.

\begin{table}[t]
\centering
\scriptsize
\setlength{\tabcolsep}{3pt}
\caption{How the region assumption enters five representative works.
Quotations are verbatim.}
\label{tab:trail}
\begin{tabular}{p{1.6cm}p{5.9cm}}
\toprule
work & treatment of ``regions are collision-free'' \\
\midrule
GCS motion planning~\cite{marcucci2023sciencerobotics} &
one sentence: the fast generator ``does not provide a rigorous
certification, but \ldots{} appears to be very reliable in practice'' \\
GCS$^*$~\cite{chia2024gcsstar} &
guarantees stated relative to the mathematical program; regions are input \\
IxG$^*$~\cite{natarajan2024ixg} &
sets ``capture the free and safe planning space'' \\

Shortest walks~\cite{morozov2025swp} &
``we produce a convex decomposition of the collision-free configuration
space'' \\

GGCS~\cite{cohn2025ggcs} &
plans confined to ``certified'' C-Free (quotes in the original), grown by
the uncertified generator \\
\bottomrule
\end{tabular}
\end{table}

\subsection{Convex region generation}

IRIS~\cite{deits2014iris} inflates a convex region around a seed by
separating it from obstacles with hyperplanes: exact against convex workspace
obstacles but not against their non-convex C-space preimages
(Sec.~\ref{sec:problem}).  Two branches answer this.  In the
statistical branch, IRIS-NP~\cite{petersen2023irisnp} accepts a region
after a run of failed counterexample searches, a probabilistic
certificate with no explicit bound, and
IRIS-ZO~\cite{werner2024iriszo} formalizes the test, bounding the
colliding volume fraction by $\varepsilon$ at confidence $1-\delta$;
C-IRIS~\cite{amice2022ciris,dai2024ciris} instead proves separation
with sums-of-squares programs.  The planning literature runs on the
statistical branch, which builds 7- and 14-DOF libraries in minutes;
the proofs cost minutes to hours per region at 7--12
DOF~\cite{dai2024ciris}, and Sec.~\ref{sec:ciris} measures what
happens when they feed a planner.

\subsection{Collision checking and feasibility certificates}
\label{sec:related-cert}

A distance query returns more than a yes/no answer: the clearance at a
configuration certifies a ball of collision-free configurations around
it~\cite{bialkowski2016certificates}, and roadmaps can be built from such
balls~\cite{lee2024bubble}, also with learned configuration-distance
models~\cite{wullt2026pbrm}.  Lazy planners search first and validate only
the path they intend to return~\cite{bohlin2000lazyprm}, a scheme with
formal edge-selection guarantees~\cite{dellin2016lazysp}.  Our planner puts both to work on a convex decomposition, whose overlaps
tell us exactly where a trajectory must be checked and whose convexity
turns a failed segment into a repairable one.  The informed anytime
planners~\cite{gammell2015bit,strub2022aitstar} sample free space
itself, build a fresh graph per query, and approach the optimum only
asymptotically, their validity resting on resolution edge checks; we sample the decision set of a precomputed decomposition, reach the
corridor optimum with one convex solve carrying a per-answer gap, and
certify the continuum.  In 2- and 3-D workspaces, where cells can be
carved exactly, decomposition planners already win this comparison
outright: neural corridor selection reports $99$ to $100\%$ success at
$2$ to $280\times$ the speed of sampling baselines~\cite{xie2026gnndip,xie2025hzmp}.  Configuration space is where the paradigm's
premise breaks, the cells turning probabilistic
(Sec.~\ref{sec:problem}); this paper carries the paradigm there with
its guarantees intact.

Existing certificates describe the program (relaxation gaps, a
bounded-suboptimality factor in GHOST~\cite{tang2026ghost}) or the
regions
(C-IRIS); ours describes the answer: an $\varepsilon$-gap on cost and a
continuous clearance certificate on the path (Sec.~\ref{sec:layer}).

% Problem Statement -- v2, 2026-08-07.  IRIS decomposition first, then the
% "why the residual is unavoidable" derivation with fig_residual, then the
% consequence and the requirement on an answer.
\section{Problem Statement}
\label{sec:problem}

This section formalizes the gap between the modeled and the physical
problem, derives why the residual must exist, and compresses its
behavior into three testable predictions.

% fig:residual is encountered here (page 2) so that it lands at the TOP of
% page 3 (double-column [t] floats go to the top of the following page).
\begin{figure*}[!t]
\centering
\includegraphics[width=0.82\textwidth]{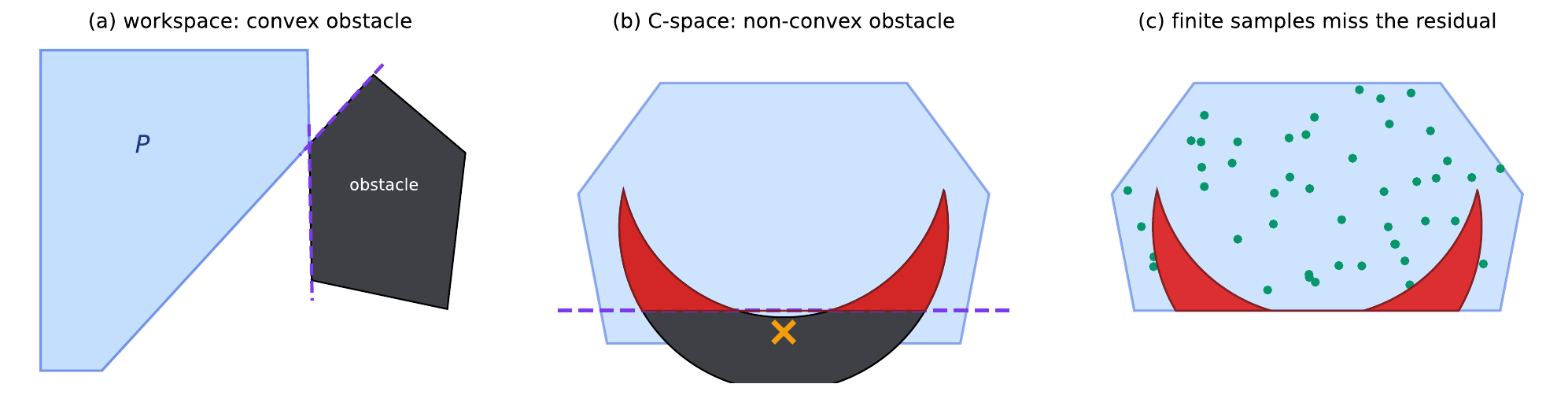}
\caption{Why the residual is unavoidable.  (a)~Against a convex obstacle,
cuts placed on its own faces are a complete and exact exclusion proof: the
free region hugs the obstacle.  (b)~The C-space obstacle is non-convex: a cut at the discovered
collision ($\times$) removes a half-space, and the parts that curve
back around it (red) remain inside the region.  (c)~Uniform samples
(green) miss the thin residual, so the statistical test accepts.
% src: e14_fig_why_pockets.py
}
\label{fig:residual}
\end{figure*}

\paragraph{Setting and decomposition}
Let $\mathcal{Q}\subset\R^{d}$ be the configuration space of a robot and
$\free\subset\mathcal{Q}$ its collision-free subset, decided by a collision
checker that we treat as ground truth.  A query is a pair $(q_s,q_g)$ of
collision-free configurations; a path is a continuous curve
$q:[0,1]\to\mathcal{Q}$ between them, with arc length as cost.

An IRIS-style generator~\cite{deits2014iris} grows a convex polytope around a
seed configuration by alternating two steps: inflate an ellipsoid inside the
current polytope, and add a separating hyperplane, a \emph{cut}, that pushes the
polytope away from an obstacle.  Grown from many seeds, the
regions form a library $\lib=\bigcup_{i=1}^{n}X_i$.  Two regions are
adjacent when their \emph{interface} $X_u\cap X_v$, itself a polytope, is
nonempty; the regions and interfaces form the graph the planner searches.
The planner solves the \emph{modeled} problem
\begin{equation}
\label{eq:modeled}
m(\lib)\;=\;\min\{\,\mathrm{len}(q)\ :\ q(t)\in\lib\ \ \forall t\,\},
\end{equation}
and the user needs the \emph{physical} one, with $\lib$ replaced by
$\lib\cap\free$.  The two coincide exactly when $\lib\subseteq\free$.

\paragraph{Non-convexity and the acceptance residual}
In the workspace with convex obstacles, cuts finish the proof: a convex
region disjoint from a convex obstacle admits a strictly separating
hyperplane, and placing the cuts on the obstacle's own faces excludes it exactly; the free
region hugs the obstacle with zero slack
(Fig.~\ref{fig:residual}a).  In configuration space the hypothesis fails.  For a convex
workspace body $W$ and robot geometry $B(q)$ posed by forward kinematics,
the C-obstacle is the preimage
\begin{equation}
O=\{\,q:\ B(q)\cap W\neq\emptyset\,\},
\end{equation}
and forward kinematics makes $O$ in general non-convex.  No hyperplane
separates a convex region from it; a cut placed at a discovered colliding
configuration removes only a supporting half-space, and the parts
of $O$ that curve back around the cut remain inside the region
(Fig.~\ref{fig:residual}b).  Finitely many cuts cannot exclude a non-convex set in general; a
universal proof needs the algebraic certificates of
C-IRIS~\cite{dai2024ciris}, whose cost is why scaled pipelines do not
run it.

Unable to prove, the generator tests: accept the region after $M$ clean
uniform samples~\cite{werner2024iriszo}.  The mathematical content is the
Bernoulli bound
\begin{equation}
\label{eq:contract}
\begin{split}
&\Pr[\text{accept}\mid\text{colliding fraction}>\varepsilon]\\
&\quad\le\;(1-\varepsilon)^{M}\;\le\;e^{-\varepsilon M}\;\le\;\delta
\quad\text{for } M\ge\tfrac{\ln(1/\delta)}{\varepsilon},
\end{split}
\end{equation}
so acceptance certifies a colliding fraction at most $\varepsilon$ at
confidence $1-\delta$; a zero-$\varepsilon$ contract needs
$M=\infty$.  Shipped defaults are $\varepsilon=1\%$,
$\delta=5\%$, and an iteration cap after which regions are admitted with
the test unfinished.  We call $\lib\setminus\free$ the \emph{residual}; it is what the test
tolerates: thin in volume, so the samples miss it
(Fig.~\ref{fig:residual}c).  The residual is not a breach of the contract;
it is the contract.

% fig:dimlaw (cited in Remark 1 below) is encountered here, early in the second
% column of page 3, so that it is placed at the top of that column under Fig. 2.
\begin{figure}[!t]
\centering
\includegraphics[width=0.85\linewidth]{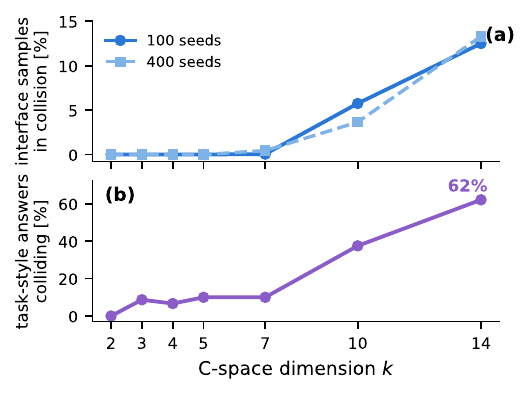}
\caption{Residual against C-space dimension.  (a)~Interface contamination on the planar
$k$-link family (fixed workspace, obstacles, and generator; only $k$
varies) at two seed budgets.  (b)~Task-style answers colliding:
stress-endpoint queries on the same family through $k{=}10$; the 14-DOF
point is the bimanual task benchmark ($18/29$).  Through $k{=}5$ the volume statistic reads exactly zero while
delivered answers already collide.
% src: e32_fig_dimlaw.py; out/e31/b100/*.json; out/e31/ans_k*.json
}
\label{fig:dimlaw}
\end{figure}

\paragraph{Boundary concentration and optimizer exploitation}
Three effects put the tolerated volume in the planner's way.
(i)~Cuts are tangent at discovered collisions, so the undiscovered slivers
hug the polytope faces: the residual lives in the boundary shell.
(ii)~High-dimensional volume concentrates there: shrinking a convex body
in $\R^{d}$ by $5\%$ leaves $0.95^{d}$ of its volume, so at $d=14$ the
shell holds $1-0.95^{14}\approx 51\%$ of the body.
(iii)~Interfaces $X_u\cap X_v$ are double boundary zones, the
meeting of two obstacle-hugging shells, and they are exactly where paths
cross.  This is why a measurement on interfaces can read
$3.2\%$ against a $1\%$ volume contract without contradiction: it is the
operationally relevant measure of the same residual.

\begin{proposition}
\label{prop:exploit}
If $m(\lib)<m(\lib\cap\free)$, then every optimizer
of~\eqref{eq:modeled} is physically invalid.
\end{proposition}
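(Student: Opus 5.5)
The argument is a contrapositive built on the inclusion $\lib\cap\free\subseteq\lib$. A path is \emph{physically valid} when $q(t)\in\free$ for all $t$. For an optimizer of~\eqref{eq:modeled}, this is the same as $q(t)\in\lib\cap\free$ for all $t$, because feasibility for~\eqref{eq:modeled} already gives $q(t)\in\lib$. So I suppose, for contradiction, that some optimizer $q^\star$ of~\eqref{eq:modeled} is physically valid. Then $q^\star$ is feasible for the physical problem, in which $\lib$ is replaced by $\lib\cap\free$.

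From feasibility for the physical problem I get $m(\lib\cap\free)\le \mathrm{len}(q^\star)$, since the physical value is a minimum, or more generally an infimum, over a set that contains $q^\star$. Optimality of $q^\star$ for the modeled problem gives $\mathrm{len}(q^\star)=m(\lib)$. Chaining the two yields $m(\lib\cap\free)\le m(\lib)$, which contradicts the hypothesis $m(\lib)<m(\lib\cap\free)$. Hence every optimizer of~\eqref{eq:modeled} has some $t$ with $q^\star(t)\notin\free$.

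It helps to note the general inequality $m(\lib)\le m(\lib\cap\free)$, which holds because the feasible set shrinks. It shows that the hypothesis is exactly the case where this inequality is strict, so the statement is not vacuous. The proof itself does not need it.

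The only point requiring care is the meaning of ``optimizer'' and of $m$ as a minimum. I will state that $m(\cdot)$ denotes the infimum of arc length over continuous curves from $q_s$ to $q_g$ in the given set, and that an optimizer is a feasible curve attaining $m(\lib)$. If none exists the claim holds vacuously; I will not argue existence, since the statement concerns every optimizer. The same argument extends to $\epsilon$-optimal solutions: any feasible $q$ with $\mathrm{len}(q)<m(\lib\cap\free)$ is physically invalid. I may add this as a remark, since it is what makes the planner's near-optimal answers relevant, but it is not needed for the proposition.
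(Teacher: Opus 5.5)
Your proof is correct and is essentially the paper's own argument. The paper argues directly that a path attaining $m(\lib)$ costs less than $m(\lib\cap\free)$ while every path inside $\lib\cap\free$ costs at least that much; you run the same chain $m(\lib\cap\free)\le\mathrm{len}(q^\star)=m(\lib)$ as a contradiction, and you also make explicit a step the paper leaves implicit, namely that feasibility for~\eqref{eq:modeled} already places $q^\star$ in $\lib$.
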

\begin{proof}
A path attaining $m(\lib)$ costs less than $m(\lib\cap\free)$, and every
path inside $\lib\cap\free$ costs at least $m(\lib\cap\free)$.
\end{proof}

\noindent
The hypothesis holds whenever the residual offers a shortcut, and by the
concentration above it usually does: minimum-length paths press against
the same shell, and grasp endpoints sit where the generator's budget
runs out first.

\begin{remark}[Dimension dependence of the residual]
\label{rem:dimlaw}
Fix the acceptance contract $(\varepsilon,\delta)$ and the seed
budget.  The tolerated residual keeps a volume share of at most
$\varepsilon$ in every dimension $d$, but effects (i)--(iii) confine
it to the boundary shell of each region, and the share of a convex
body within a relative distance $\rho$ of its boundary grows as
$1-(1-\rho)^{d}$.  The residual's share of the interface measure, the
measure a path has to cross, therefore grows with $d$ while the
reported volume statistic does not, and by
Proposition~\ref{prop:exploit} an optimizer selects that residual
whenever it shortens the path: the probability that a delivered answer
collides rises with $d$ even where uniform sampling reports no
contamination.  Fig.~\ref{fig:dimlaw} isolates the effect on a planar
$k$-link arm with everything but $k$ fixed: interface contamination is
zero through $k{=}5$ and $13\%$ at $k{=}14$, while task-style answers
already collide from $k{=}3$ on.
\end{remark}

\paragraph{Predictions}
Three consequences follow, and Section~\ref{sec:measure} tests each:
the dimension dependence of Remark~\ref{rem:dimlaw}; the amplification from volume to answers,
whose error exceeds the residual's volume share by an order of
magnitude and grows as query endpoints approach obstacles, with
Proposition~\ref{prop:exploit} as its kernel; and the failure of
library-level tightening, which removes interfaces before it removes
the residual, because certainty about a $d$-dimensional volume is paid
wherever the volume meets obstacles, where regions must also meet each
other.

% fig:interface (cited in Sec. IV-A) is encountered here, in the second column
% of page 3, so that it is deferred to the top of page 4 next to its citation.
\begin{figure*}[!t]
\centering
\includegraphics[width=0.9\textwidth]{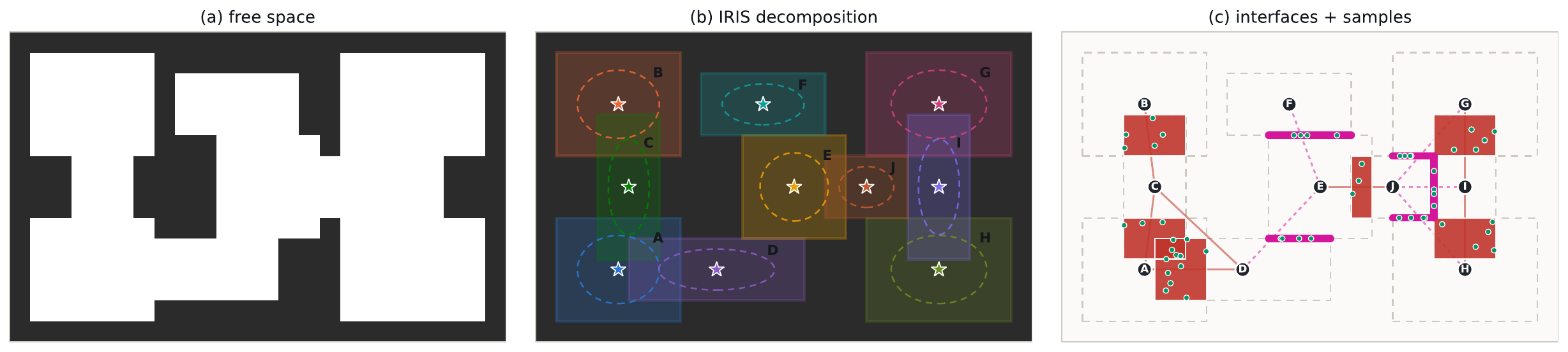}
\caption{From free space to a sampled decision set.  (a)~The free space of
a floor plan.  (b)~Its IRIS decomposition: seeds (stars) inflate into
overlapping convex regions.  (c)~The interfaces the decomposition
induces: volume overlaps (red) and shared faces (magenta), with uniform
samples (green) drawn on every interface; nodes and thin lines sketch the
region graph they populate.
% src: e22_fig_decomp_paper.py
}
\label{fig:interface}
\end{figure*}

\paragraph{Certified answers}
A planner consuming a library that satisfies only~\eqref{eq:contract} should
return, for each query, either a path $q$ with a proof that
$q(t)\in\free$ holds for every $t\in[0,1]$, or an explicit report that no such path was found.  The
proof must cover the continuum, because the residual is thin exactly
where paths run.  The next section builds a
planner with this property.

% Methodology, part 1 -- sampling on overlaps and shared faces.
% Figure: fig_decomp.pdf (generated by stageA/e22_fig_decomp_paper.py).
\section{Method: Sampling and Certification}
\label{sec:sampling}

This section builds the planner: sampling on the interfaces, informed
iteration toward the optimum, and the certification layer that proves
every delivered answer.

\subsection{Interface sampling and roadmap}

Inside one convex region the cheapest way between two configurations is the
straight segment between them, and that segment stays inside the region.  A
path that visits a sequence of regions $X_{v_1},\dots,X_{v_k}$ at minimum
length is therefore piecewise linear, and its vertices lie on the interfaces
$X_{v_i}\cap X_{v_{i+1}}$.  The interfaces hold every free variable of the modeled
problem~\eqref{eq:modeled}, and sampling them discretizes the decision
itself rather than the space around it.

Interfaces come in two shapes, both polytopes
(Fig.~\ref{fig:interface}).  A \emph{volume overlap} is $d$-dimensional: the
two regions share a body of configurations, and a crossing may sit anywhere
inside it.  A \emph{shared face} is $(d\!-\!1)$-dimensional: the regions meet
on a facet, and every crossing lies on that facet.  Contacts of lower dimension (an edge, a corner, a pinch between two
obstacles) also satisfy $X_u\cap X_v\neq\emptyset$, and admitting them as
transitions makes the roadmap claim passages that no robot can execute:
two regions can meet at a single point wedged between obstacles, and a
segment through that point is legal in the model with zero clearance in
the world.
We therefore require
\begin{equation}
\label{eq:traversable}
\dim\!\left(X_u\cap X_v\right)\;\ge\;d-1
\end{equation}
before an interface enters the roadmap.
% src: e6 step-1 verification, membership/traversability study (12.73 vs 13.77 without the rule)

Each interface is available as $\{x: A_ux\le b_u,\ A_vx\le b_v\}$.  We detect
the rows that hold with equality throughout, which gives the affine hull of
the interface and its intrinsic dimension, and we sample in hull
coordinates.  Four samplers cover the shapes a crossing can take.

\emph{Center} takes the Chebyshev center of the interface, the crossing
that stays furthest from its rim.  \emph{Area} draws uniform samples in the
interface, spreading the crossings over the full set of options.
\emph{Contour} samples the relative boundary, where the shortest crossing
presses when a corridor turns.  \emph{Contour-in} pulls the contour inward
by $\lambda$, keeping samples on the facet proper; a raw contour point
sits on a lower-dimensional face, exactly the geometry
that~\eqref{eq:traversable} rules out.  All four deliver the same
certified coverage on the closing benchmark, and center crossings,
maximal-clearance points by construction, certify fastest; they anchor
the head-to-head of Sec.~\ref{sec:h2h}.
% src: out/e24 ablations (15/29 for all modes; center t_first 0.11s)
Every sample is a configuration that can be tested: the collision
checker keeps the clean ones, and the rest never enter the roadmap.

\paragraph{Roadmap construction}

Nodes are the accepted samples together with $q_s$ and $q_g$.  Each node
records the regions that produced it (an interface sample records both
parents), and two nodes are joined when they share a region, with
Euclidean distance as the edge cost.  Membership is carried, not re-derived:
a node inherits the regions that produced it, and duplicate nodes merge by
taking the union of their region lists.
% src: e6 declared-membership study (dedup must union parents; geometric membership lets paths cross thin walls)

\subsection{Informed iteration}
\label{sec:informed}

The planner runs in rounds and improves monotonically
(Fig.~\ref{fig:method}, Alg.~\ref{alg:planner}).  Round one samples every
interface with a small budget, searches the roadmap, and produces the first
incumbent path.  The incumbent's length $c$ bounds what any better path may do.  A path through interface $F$ is at least as
long as
\begin{equation}
\label{eq:ell}
\ell(F)\;=\;\min_{x\in F}\;\lVert x-q_s\rVert+\lVert x-q_g\rVert ,
\end{equation}
so every interface with $\ell(F)>c$, outside the informed ellipse of
Gammell et al.~\cite{gammell2014informed} with foci $q_s,q_g$ and major
axis $c$, is off every improving path, and discarding it is lossless;
batch and adaptive descendants~\cite{gammell2015bit,strub2022aitstar}
refine the same device for sampling planners.  The next round doubles the sample budget and spends it only
on the survivors: the search volume shrinks as
the incumbent tightens.  Disabling the filter changes no answer
and multiplies total planning time by $2.2$; the pruning is pure speed.
% src: out/e24 inf0 runs (26s vs 12s median total)  Each round re-runs the search on
the denser roadmap, and the incumbent only ever improves.  In
Fig.~\ref{fig:method} the first round costs $6.61$; its ellipse retires the detour over the
top of the map, and the resampled round returns $5.28$.

\begin{figure*}[t]
\centering
\includegraphics[width=0.9\textwidth]{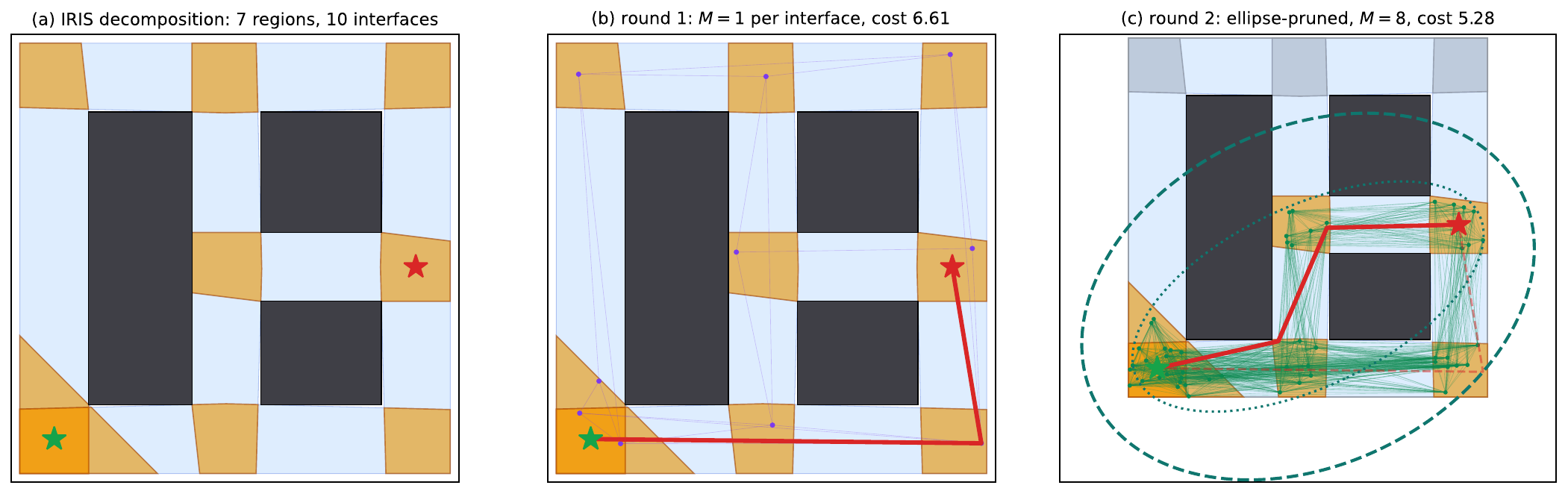}
\caption{The planner on a 2-D library (all elements computed).  (a)~IRIS
decomposition; orange sets are the interfaces.  (b)~Round one: one sample
per interface, Dijkstra on the roadmap, first incumbent.  (c)~The incumbent defines the informed ellipse (dashed); interfaces
outside it (gray) leave the search, the survivors are resampled more
densely, and the path improves; the tighter incumbent yields the
smaller dotted ellipse for the next round.
% src: e21_fig_method.py
}
\label{fig:method}
\end{figure*}

\begin{algorithm}[t]
\caption{Interface-sampling planner (one query)}
\label{alg:planner}
\begin{algorithmic}[1]\footnotesize
\Require regions $\{X_i\}$, interfaces $\mathcal{I}$ with
$\dim\ge d-1$, query $(q_s,q_g)$, budget $M\gets M_0$
\State $c\gets\infty$;\quad $\mathcal{A}\gets\mathcal{I}$
\Comment{alive interfaces}
\While{time remains}
  \State draw $M$ samples on each $F\in\mathcal{A}$; keep those the
  collision checker accepts
  \State roadmap $\gets$ accepted samples $\cup\,\{q_s,q_g\}$; connect
  nodes that share a region
  \Repeat
    \State $p\gets$ shortest path on the roadmap
    \State $p\gets\textsc{VerifyRepair}(p)$
    \Comment{lazy: Sec.~\ref{sec:certify}}
    \State remove from the roadmap any edge it blacklisted
  \Until{$p$ certified or no path remains}
  \If{$p$ certified \textbf{and} $\mathrm{len}(p)<c$}
    \State $c\gets\mathrm{len}(p)$;\quad $p^\ast\gets p$
    \Comment{certified costs drive the pruning}
  \EndIf
  \State $\mathcal{A}\gets\{F\in\mathcal{A}:\ \ell(F)\le c\}$
  \Comment{lossless: Eq.~\eqref{eq:ell}}
  \State $M\gets 2M$
\EndWhile
\State $p'\gets\textsc{VerifyRepair}(\text{corridor SOCP polish of }
p^\ast)$;\ \Return the cheaper certified of $p^\ast,p'$ with
$\gamma=\mathrm{len}/\mathrm{LB}-1$
\end{algorithmic}
\end{algorithm}

Verification is interleaved, not terminal: a dirty
segment is repaired by a local detour inside its own region, an
unrepairable edge is blacklisted and the search reroutes within the same
round, and only certified costs drive the pruning, since an unverified short path would
tighten the ellipse illegally.  The roadmap holds combinatorially many
paths; one is checked per search, and it usually passes on the first
try because its waypoints were validated before they entered the graph:
one rejection across the whole curated benchmark, a single path's
certification per query even at $3.2\%$ contamination.
% src: e7_panda55.json (one lazy rejection); e16 cert_calls median 201
Validating the roadmap eagerly instead costs two to three orders of
magnitude more, almost all of it on edges no shortest path will ever
use.  The corridor polish runs once, at the end, and is itself
re-verified.

% Certification layer -- v2, 2026-08-08.  Implements VerifyRepair, the
% lazy per-candidate proof of Alg. 1: certificate-ball advancement,
% in-region local detours, blacklist on failure, and the re-verified
% final polish.
\subsection{Continuous clearance certificate}
\label{sec:certify}
\label{sec:layer}

Algorithm~\ref{alg:planner} calls $\textsc{VerifyRepair}$ on every
candidate the search proposes; the subsections that follow implement
it.

A collision checker answers more than yes or no.  A clearance query at
$q$ returns $\phi(q)$, the smallest distance over all $903$ filtered
body pairs, measured on the same collision geometry the boolean checker
uses, so the certificate and the ground truth share one oracle.
Kinematics bounds how fast $\phi$ can fall.  A point on one arm moves
at $\|\dot p\|\le\sum_i r_i|\dot q_i|$, the sum over that arm's
joints, where the reach $r_i$ is the farthest any collision geometry
distal of joint $i$ extends from its axis, and Cauchy-Schwarz turns the
sum into $\sqrt{\sum_i r_i^2}\,\|\dot q\|$.  The same constant
covers arm-arm pairs, whose joint sets concatenate, and self-collision
pairs, whose joint set is a subset.  Hence
$\phi$ is $L$-Lipschitz with $L=\sqrt{\sum_{i=1}^{14} r_i^2}$;
reaches measured on the model give $L\approx 3.4$, the implementation
rounds up to $L=6.0$, and half a million random directional probes of
$\phi$ measure a worst-case rate of $1.01$, a safety factor above
five.  The arms carry no payload in this benchmark; a grasped object
lengthens each $r_i$ by its overhang and changes nothing else.
% src: reach measurement (per arm 1.42,1.22,1.01,0.80,0.61,0.40,0.32 m
% incl. 0.12 m geometry pad -> sqrt sum 3.40); out/e27_lipschitz.json
% src: out/e27_lipschitz.json (sup 1.007, secant max 1.139)

\begin{lemma}[certificate ball]
\label{lem:ball}
If $\phi$ is $L$-Lipschitz and $\phi(q)>0$, then every configuration in the
ball $B\!\left(q,\phi(q)/L\right)$ is collision-free.
\end{lemma}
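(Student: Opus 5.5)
The plan is a one-line Lipschitz argument, organized so that the only real content, what "collision-free" means in terms of $\phi$, is stated explicitly. First I fix the interpretation. $\phi(q)$ is the minimum distance over the filtered body pairs, and it is computed on the same geometry the boolean checker uses. A configuration is therefore collision-free exactly when $\phi>0$: every pair is separated by a positive distance, so no pair intersects. I take this equivalence as the defining property of the oracle, as the paper does when it says the certificate and the ground truth share one oracle. If the implementation reports signed distance, I only need the weaker implication $\phi(q')>0\Rightarrow q'\in\free$.

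Next, take any $q'$ with $\|q'-q\|<\phi(q)/L$. By the $L$-Lipschitz property, $\phi(q')\ge \phi(q)-L\|q'-q\|$, and the right-hand side is strictly greater than $\phi(q)-\phi(q)=0$. Hence $\phi(q')>0$, and by the first step $q'$ is collision-free. This covers the open ball $B(q,\phi(q)/L)$.

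The one point needing care is the boundary, if $B$ is read as a closed ball. There $\|q'-q\|=\phi(q)/L$, the estimate gives only $\phi(q')\ge 0$, and contact with $\phi=0$ is not excluded. I would therefore state the lemma for the open ball, which is what the chaining of overlapping balls in the segment certificate uses. Alternatively, one can note that the implementation's $L=6.0$ strictly exceeds the true constant, so the radius $\phi(q)/6$ lies strictly inside the true certified radius and even the closed ball is safe.

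The Lipschitz hypothesis itself is assumed in the statement, justified beforehand by the reach and Cauchy--Schwarz bound. So the argument needs only the oracle equivalence and the triangle-type inequality above, and I see no substantive obstacle beyond the open-versus-closed convention.
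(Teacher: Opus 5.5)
Your proof is correct and is the paper's own argument: for $\lVert q'-q\rVert<\phi(q)/L$ the Lipschitz bound gives $\phi(q')\ge\phi(q)-L\lVert q'-q\rVert>0$, and the paper's strict inequality makes the ball open, as you read it. Your explicit oracle convention ($\phi(q')>0$ implies $q'\in\free$) and your open-versus-closed remark are sound additions that the paper leaves implicit.
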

\begin{proof}
For $\lVert q'-q\rVert<\phi(q)/L$,
$\phi(q')\ge\phi(q)-L\lVert q'-q\rVert>0$.
\end{proof}

One distance query therefore certifies a continuum of configurations, and a
chain of such balls certifies a segment: starting at $a$, query
$\phi$, advance by $\phi/L$ along the segment, and repeat until $b$ is
reached.  The balls overlap by construction, so their union covers the segment,
and the segment is collision-free along its whole length.  There is no resolution parameter: strides lengthen where
clearance is large and shorten near obstacles, so no sliver between two test points is left for the residual.  Certifying a candidate on the 14-DOF library takes about $200$
clearance queries at $2.3$\,ms each, under half a second, mostly near
obstacles where the strides shorten.
% src: e16_headtohead.json (ours_cert_calls median 201); e8_validate.py
The advancement stalls only where $\phi\to 0$, which is precisely a
discovered violation: the certificate and the detector are the same
computation.  The floor $\phi_{\min}=2\times10^{-3}$ that ends a stalled
advancement sits on a plateau: halving or doubling it leaves the certified count unchanged.
% src: out/e24 hmin sweep (14/14/14 at 1e-3..5e-3; 6 at 1e-2)

\subsection{In-region repair}
\label{sec:feedback}

When the advancement reports a dirty stretch on segment $(a,b)$, the
segment does not have to be abandoned.  Both endpoints came from the same region $X$, and convexity makes $X$
a repair kit: for any via point $z\in X$, the detour
$(a,z),(z,b)$ stays inside $X$, so it crosses no region boundary and needs
no new transitions.  The repair samples candidate via points in $X$ around
the dirty stretch, tests each against the collision checker, splices in the
first that certifies, and re-runs the advancement on the modified segment.
The residual is thin, so a legal detour usually exists a few centimeters
away; when the retry budget is exhausted the edge is blacklisted, and
control returns to the search of Algorithm~\ref{alg:planner}, which routes
around it.  
% src: e7_panda55.json (one lazy rejection); e8_validate.py repair_path

Verification produces one fact, that the candidate collides at a
specific configuration $q^\ast$, and its value depends on what the
planner can do with it.  A convex-optimization pipeline has three
options, and none of them is a repair.  Re-solving returns the same
optimum and the same collision; by Proposition~\ref{prop:exploit}, when
the shortcut through the residual is real, every optimum of the program
is invalid.  Adding a constraint that excludes $q^\ast$ moves the
generator's surgery online, one supporting half-space at a time with the
residual curving back around each cut (the surgical remedy of
Sec.~\ref{sec:ciris}).
% src: e8_repair_biman219.json (24,080 cuts, 105/219 budget-exhausted, 19% interfaces lost, 1.25% left)
Deleting the offending region is a blacklist at the wrong
granularity, a corridor for a sliver; eight of the queries our layer solves with an in-region detour have
no alternative corridor.
% src: e16_headtohead.json (rescued pairs)

The structural difference is ownership of the answer.  An optimum is a
global object that feedback reaches only by re-solving a changed problem, with no bound on the rounds.  A sampled candidate is piecewise and the planner holds its
waypoints: feedback lands at the failure site as a spliced via point or a blacklisted edge, the re-search is a millisecond Dijkstra, and a finite graph only shrinks.

\subsection{Corridor polish with re-verification}

The certified path is a chain of straight segments through sampled
crossings, so its cost carries sampling slack.  The region sequence it
visits defines a corridor, and on that corridor the modeled
problem~\eqref{eq:modeled} restricts to a second-order cone program: one
free waypoint per interface, minimize total length.  Solving it moves the crossings to their optimal positions and removes
most of the slack: on the 14-DOF library, raw sampled paths carry $+41.5\%$ over the corridor
optimum and polished paths $+9.6\%$.
% src: e7_biman219.json (raw -> polish)
The polish optimizes the modeled problem, so Proposition~\ref{prop:exploit} applies to it: its
waypoints move toward the model's boundary, residual included.
Algorithm~\ref{alg:planner} therefore sends the polished path back through
$\textsc{VerifyRepair}$ and keeps it only when it is both certified and
cheaper; otherwise the pre-polish certified path stands.

\subsection{Output guarantees}

The planner hands back a path with two guarantees of different kinds.  The clearance certificate is deterministic and covers the
continuum: the
delivered trajectory is collision-free along its entire length.  The
cost certificate is the gap $\gamma$ between the delivered length and a
lower bound $\mathrm{LB}\le m(\lib)$; the implementation uses the
joint-space distance $\lVert q_s-q_g\rVert$, and any relaxation
of~\eqref{eq:modeled} can tighten it.  Over the fifteen certified
benchmark answers $\gamma$ has median $7.1\%$, five sit within $2\%$,
and one attains its bound exactly.
% src: straight-line gamma over center s0 (median 7.1, range 0-50.3);
% relaxation spot checks: pair00 0.930 vs 0.804, pair03 tie, pair10 +0.2%  The two certificates age differently under a contaminated library:

\begin{lemma}[graceful degradation]
\label{lem:immunity}
Any lower bound $\mathrm{LB}\le m(\lib)$ also satisfies
$\mathrm{LB}\le m(\lib\cap\free)$.  A certified path of length $\ell$
therefore satisfies $\ell\le(1+\gamma)\,m(\lib\cap\free)$ with
$\gamma=\ell/\mathrm{LB}-1$, whatever the residual is.
\end{lemma}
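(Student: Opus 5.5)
The first claim is a monotonicity statement about the modeled problem~\eqref{eq:modeled} with respect to its feasible set, and the second is algebra on the definition of $\gamma$. I would begin by comparing the two feasible sets. Every path $q$ with $q(t)\in\lib\cap\free$ for all $t$ also satisfies $q(t)\in\lib$ for all $t$. So the feasible set of the physical problem is contained in that of the modeled one, and an infimum over a smaller set cannot be smaller:
\[
m(\lib)\;\le\;m(\lib\cap\free).
\]
If the physical problem is infeasible, I adopt the convention $m(\lib\cap\free)=+\infty$, and the inequality holds trivially. Chaining with the hypothesis $\mathrm{LB}\le m(\lib)$ gives $\mathrm{LB}\le m(\lib\cap\free)$. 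This is the same comparison that underlies Proposition~\ref{prop:exploit}, read in the opposite direction.

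For the second claim I would assume $\mathrm{LB}>0$, so that $\gamma$ is defined. This holds for the implemented bound $\lVert q_s-q_g\rVert$ whenever $q_s\neq q_g$. The definition $\gamma=\ell/\mathrm{LB}-1$ rearranges to $\ell=(1+\gamma)\,\mathrm{LB}$. Since $1+\gamma=\ell/\mathrm{LB}>0$, multiplying the first claim by $1+\gamma$ preserves the inequality and yields $\ell\le(1+\gamma)\,m(\lib\cap\free)$.

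The phrase ``whatever the residual is'' needs a short justification. The derivation uses only $\lib\cap\free\subseteq\lib$, and this containment holds for every library whatever its contamination. It also never uses the clearance certificate. The certificate's role is separate: by Lemma~\ref{lem:ball} and the ball chaining of Sec.~\ref{sec:certify}, it guarantees that the path of length $\ell$ is actually feasible for the physical problem. Together the two give the meaningful sandwich $m(\lib\cap\free)\le\ell\le(1+\gamma)\,m(\lib\cap\free)$.

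There is no real mathematical obstacle. The only care needed is with degenerate cases: $\mathrm{LB}=0$, where $\gamma$ is undefined and I would exclude or treat it as $\gamma=+\infty$, and an infeasible physical problem, where the bound is vacuous. I would also make explicit that the lower bound must bound $m(\lib)$ over the whole library, not merely over the corridor the search visited. Otherwise the transfer $\mathrm{LB}\le m(\lib)\le m(\lib\cap\free)$ would fail.
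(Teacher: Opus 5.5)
Your proof is correct and follows the paper's argument. The containment $\lib\cap\free\subseteq\lib$ shrinks the feasible set of~\eqref{eq:modeled}, giving $m(\lib)\le m(\lib\cap\free)$, and the bound on $\ell$ is then algebra on $\gamma=\ell/\mathrm{LB}-1$; your treatment of the degenerate cases ($\mathrm{LB}=0$, an infeasible physical problem) is extra care the paper leaves implicit.
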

\begin{proof}
$\lib\cap\free\subseteq\lib$ shrinks the feasible set
of~\eqref{eq:modeled}, so $m(\lib)\le m(\lib\cap\free)$.
\end{proof}

\noindent
Contamination widens $\gamma$; it cannot falsify it.  An unvalidated geometric cost moves in the opposite direction: it
quotes $m(\lib)$ as the achieved cost of a trajectory that may not be
executable at all.

\paragraph{Implementation}
The planner runs with $M_0{=}1$ center crossing per interface, eight
doubling rounds, certificate floor $\phi_{\min}{=}2\times10^{-3}$,
$L{=}6.0$, and at most $15$ via-point trials over four repair rounds
per segment; the corridor polish and the reference planner's programs
are solved with Clarabel through Drake, the reference in its benchmark
configuration ($60$\,s timeout, $K{=}1$, at most one revisit).  All
timings come from one 32-thread CPU node without GPU.  The task queries
are the $29$ pairs of the benchmark's named grasp configurations that
the library connects; the classical baseline is RRT-Connect with
$0.35$\,rad extension, $0.03$\,rad edge resolution, and $60$\,s per
seed.

% Measurement section -- v1, 2026-08-08.  The residual measured at volume
% and at answer level; the amplification; the three library-level routes
% run to completion.  Labels: sec:measure, sec:ciris.
\section{Experiments}
\label{sec:measure}

This section measures the residual and tests the three predictions
of Section~\ref{sec:problem}, runs the
library-level fixes to completion, and prices the certified planner
against its reference.

\subsection{Libraries and measurement protocol}

Two libraries built with the standard sampling-based generator anchor the
measurement.  The \emph{curated} library covers a 7-DOF arm and a shelf
with $55$ regions grown from hand-picked seeds; the \emph{scaled} library
covers a 14-DOF bimanual scene (two arms,
grippers, shelving, $903$ filtered collision pairs) with $219$ regions grown from $22$ grasp
seeds and $200$ random seeds at the generator's default settings.  Ground
truth throughout is the scene's collision checker, the same oracle the
generator itself sampled.  The instrument: uniform interface samples tested against the oracle.

\subsection{Interface contamination}

% src: e7_panda55.json; e7_biman219.json

Curation works: with seeds placed by hand and time to spare, the
curated library reads $6$ of $2{,}000$ interface samples in collision
($0.3\%$), the regime the family's demonstrations live in.  Scaling
changes that.  Random seeds, default budgets, and an iteration cap that
admits regions with the acceptance test unfinished leave $64$ of
$2{,}000$ in collision ($3.2\%$), above the per-region volume contract,
exactly as the concentration argument of Section~\ref{sec:problem}
predicts.  The number is a property of the configuration, not of one draw (four
regenerations at identical settings: $2.8$ to $3.6\%$), and of dimension (zero at 2 and
3 DOF across a testbed and twelve aerial libraries, $0.15$ to $0.3\%$
at 7, $2.8$ to $3.6\%$ at 14).  The controlled $k$-link family of Remark~\ref{rem:dimlaw}
(Fig.~\ref{fig:dimlaw}) isolates the dependence; both seed budgets
trace the same law, and the optimizer finds what two thousand uniform
samples cannot.

% src: out/e31/b100/k*.json (fixed budget 100 seeds);
% out/e31/ans_k*.json (contained-endpoint answers, dense libraries)
% src: out/e30/uav_*.json (12 worlds, 0/2000 each);
% out/e30/panda_scaled_report.json (3/2000); e15_ciris.json (2-DOF)
% src: out/e24_lib/report_s1..s4.json (3.60/3.10/2.75/3.10%)

\subsection{Answer-level failure}

Volume is the cause; answers are the harm.  We benchmarked the reference
answers of \gcsstar~\cite{chia2024gcsstar} on the scaled library under a
protocol built for certainty: bit-exact reproduction of the stored
cost in the benchmark's exact configuration, the full waypoint sequence
re-derived as the optimum of the convex program, and validation by both
the continuous certificate of Section~\ref{sec:certify} and $2{,}000$
boolean collision probes.
Of the connected task queries, $18$ of $29$ ($62\%$) return colliding
reference answers, against $2$ of $16$ random benchmark queries;
violations reach $39.4$\,mm penetration at the median, $91.3$\,mm at
the deepest, and one path spends $51\%$ of its length in collision.  The verdicts reproduce on a second machine and survive regeneration: four fresh instances return colliding reference answers on $83$ to $84\%$ of the connected pairs drawn from all $22$ grasp configurations ($38/46$, $38/46$, $38/46$, $47/56$).
% src: out/e24_lib/report_s1..s4.json (38/68, 38/68, 38/68, 47/78)
% src: e9_verify_gcsstar.json vs e9_verify_gcsstar_cluster.json

% src: e9_verify_gcsstar.json (2/16); e9_task_sweep_verdicts.json (18/29);
% e9b_depth_profiles.json (depths; all costs reproduced exactly)

Every violation is a single contiguous
mid-path interval with both endpoints clear of obstacles by $38$\,mm or
more: the grasp
configurations are valid, and the path between them is what enters the
shelving.  

\subsection{Library-level remedies}
\label{sec:ciris}

The measurement invites an objection: tighten the generator and the
residual disappears.  We ran the three natural versions of that idea to
completion.

\paragraph{Stricter acceptance contract}
We regenerated the scaled library from the same $222$ seeds with
$\varepsilon=0.1\%$, $\delta=1\%$, four times the particles, ten times
the iterations, and the acceptance test enforced as a gate: a region
that fails is regrown and finally dropped, never admitted unfinished.
The gate accepts $47$ of $222$ seeds, only $3$ of the $22$
shelf-adjacent grasp seeds among them, and the survivors are pairwise
disjoint: zero interfaces against $2{,}757$ in the default library.  We
had registered the opposite prediction.
% src: e10_regen219_report.json (47/222, 0 portals; ~40 node-hours)

\paragraph{Sums-of-squares certification}
C-IRIS~\cite{dai2024ciris} replaces the statistical test with
sums-of-squares proofs.  We composed it with our layer on a 2-DOF
testbed, using the same seven seeds for both generators.  The certificate is real: zero
contamination on $3{,}995$ benchmark samples, against a $17\times$
generation cost.  The certified regions cover $33\%$ of free space
against IrisZo's $83\%$, and overlap in a single pair; at $2.7$ times
the seed budget the certified library still fragments into seven
connected components and answers none of $30$ queries.  
% src: e15_ciris.json; e15b_ciris_dense.json

\paragraph{Uniform margins}
Eroding every region by a margin $\delta$ traces the whole trade-off.
Across nine margins up to $\delta=0.10$\,rad, contamination falls only
from $3.39\%$ to $2.11\%$ while the connected benchmark queries fall
from $16$ to $10$ of $16$: the curve never approaches soundness.  The residual
hides in overlap volumes deeper than any uniform margin the
connectivity survives.
% src: out/e8_erosion_biman219.json (nine-delta sweep)
% src: e8_erosion_biman219.json (16 -> 12 queries connected)

\paragraph{}
A fourth, surgical option, excising discovered residual region by
region, maims the graph while failing to finish
(Sec.~\ref{sec:feedback}).  All four fail the same way, as Section~\ref{sec:problem} predicts.

% Experiments -- v1, 2026-08-08.  Head-to-head on all task queries;
% clean-library performance; what certification costs.  Label: sec:h2h.
\subsection{Comparison on the scaled library}
\label{sec:h2h}

The closing experiment runs both planners on all $29$ connected task queries of the 14-DOF
library: same machine, same library, same queries.  The reference arm is \gcsstar\ in the
benchmark's exact configuration, its answer re-derived as the optimum of
its own program; our arm is Algorithm~\ref{alg:planner} with center crossings.  Both delivered
trajectories face the same judge: the continuous certificate, with dense
boolean probes as the second mechanism.
% src: e16_headtohead.json

\paragraph{Correctness}
The reference planner delivers $21$ trajectories that fail
certification ($18$ probe-confirmed, $3$ on razor-thin margins) and
reports success on every one; ours delivers $15$ certified paths and
$14$ explicit refusals, no invalid answer.  A refusal is information:
every refused pair's reference answer itself fails certification, and thirteen of the fourteen carry certified witnesses from other
planner runs; one defeats every planner tried.  Across ten seeds the certified count moves by at
most one and certified costs vary by a $0.15\%$ median coefficient of
variation.
% src: out/e23_baseline.json; out/e24 seed runs

\paragraph{Speed}
Certification is not a tax; here it is a fourteen-fold speedup.  Our
first certified answer arrives in $0.11$\,s at the median against
$1.59$\,s unverified: center crossings maximize clearance, so the
first candidate's certificate balls are the largest the interfaces
offer, and verification settles in about $200$ clearance queries.

\paragraph{Optimality}
On all eight pairs whose reference answer is physically valid, the
certified cost equals the reference cost to four decimal places; the rescued pairs pay a median $+9.2\%$ (range $+5.1$ to $+50\%$),
the length of the legal detour around an obstacle the reference answer
passes through.

\paragraph{Comparison with sampling-based planning}
Narrow passages are where decomposition planners earn their keep
(Sec.~\ref{sec:related-cert}): a measure-zero target for random sampling is an explicit
interface here.  RRT-Connect with the same ground-truth checker, ten seeds per pair,
solves every pair in raw form, yet only $45\%$ of its
shortcut-smoothed runs survive the continuous certificate; its pipeline
reaches a first certified answer in $10.7$\,s median against our
$0.11$\,s, and on three pairs threading the shelf interior it
certifies zero of thirty runs while the layer delivers.  Where both
certify, its per-seed median cost lands within $1.4\%$ of ours: the
decomposition concedes almost nothing in quality while answering
deterministically, two orders of magnitude faster.  
% src: out/e23_baseline.json (45% cert rate; 10.7s median; 0/30;
% per-seed ratio 0.986)

\paragraph{Curated library}
Where the residual is negligible the layer is simply a fast planner: on
the curated 7-DOF library the first certified incumbent arrives in
$0.13$\,s, $21\times$ before the search baseline's best answer, and
the corridor polish closes the sampling slack from $+5.7\%$ to
$+1.3\%$.
% src: e7_panda55.json; e7_biman219.json
The supplementary video plays reference and certified answers side by
side, with the carried bottles inside the collision model: a 7-DOF arm
moving a bottle between shelves, where the reference sweeps hand and
bottle $68$\,mm through a board; the bimanual arms carrying two bottles
at once and handing one over, with reference violations up to
$50$\,mm; and an aerial benchmark on which the reference optimum flies
through building walls while reporting success and the certified
reroute is $0.9\%$ shorter.
% src: e38 best_d (68.5 mm), e35e/e35h/e35k (48.9/49.7/24.1 mm), e17_uav_compare.mp4 (54.3 m vs 53.79 m)

% Conclusion -- v1, 2026-08-08.
\section{Conclusion}

Planning on graphs of convex sets rests on regions that a sampling-based
generator can only promise probabilistically, and the promise breaks
precisely where optimizers look: a $3.2\%$ interface residual on a scaled
14-DOF library leaves most task-query reference answers physically colliding,
reported as successes.  The cause is structural: cuts cannot exclude
non-convex C-obstacles, finite samples cannot certify a continuum, and an
optimizer concentrates on exactly the error the acceptance test tolerates;
the library-level remedies dissolve the region graph before they deliver soundness.

The planner this paper builds accepts the library as it is and moves
the guarantee to the answer.  Where the model is right the guarantee costs nothing: the certified answer reproduces the reference optimum in a
fourteenth of the reference's time.  Where the model is wrong it returns a
certified detour or an explicit refusal, never a silent violation.

\section*{Acknowledgment}
Claude (Anthropic; Opus~5) was used for editing and
grammar enhancement of the text and assisted in writing and checking
parts of the experiment code.  It generated no result, figure, or
number; every passage and every line of code it touched was reviewed
and verified by the authors.

\def\IEEEbibitemsep{0pt plus .2pt}

\end{document}